\definecolor{mydarkblue}{rgb}{0,0.08,0.45}
\DeclareMathOperator{\rmd}{\mathrm{d}} 
\DeclareMathOperator{\adj}{adj} 
\DeclareMathOperator{\Tr}{Tr} 
\newtheorem{lemma}{Lemma}
\newtheorem{theorem}{Theorem}
\newtheorem{corollary}{Corollary}
\newtheorem{remark}{Remark}
\newtheorem{proposition}{Proposition}
\title{Ensemble Kernel Methods, Implicit Regularization and Determinantal Point Processes}
  \author{Joachim Schreurs \\
  Department of Electrical Engineering\\
  ESAT-STADIUS, KU Leuven\\
  Kasteelpark Arenberg 10, B-3001 Leuven, Belgium\\
  \texttt{joachim.schreurs@kuleuven.be}\And  Micha\"el Fanuel \\
  Department of Electrical Engineering\\
  ESAT-STADIUS, KU Leuven\\
  Kasteelpark Arenberg 10, B-3001 Leuven, Belgium\\
  \texttt{michael.fanuel@kuleuven.be}\And Johan A.K. Suykens \\
  Department of Electrical Engineering\\
  ESAT-STADIUS, KU Leuven\\
  Kasteelpark Arenberg 10, B-3001 Leuven, Belgium\\
  \texttt{johan.suykens@kuleuven.be}}
\begin{document}

\vspace{1cm}

\twocolumn[{\centering{\Large \textbf{Ensemble Kernel Methods, Implicit Regularization and Determinantal Point Processes}\par}\vspace{5ex}
	{\large Joachim Schreurs\footnotemark[1], Micha\"el Fanuel\footnotemark[1] and Johan A.K. Suykens\footnotemark[1] 
	}\vspace{6ex}}]\footnotetext[1]{Department of Electrical Engineering, ESAT-STADIUS, KU Leuven,  Kasteelpark Arenberg 10, B-3001 Leuven, Belgium\\
	\scriptsize{\texttt{\{joachim.schreurs;michael.fanuel;johan.suykens@kuleuven.be\}}}}
	
\vspace{2cm}

\begin{adjustwidth}{-5pt}{-5pt}
\begin{abstract}
By using the framework of Determinantal Point Processes (DPPs), some theoretical results concerning the interplay between diversity and regularization can be obtained. In this paper we show that sampling subsets with kDPPs results in implicit regularization in the context of ridgeless Kernel Regression. Furthermore, we leverage the common setup of state-of-the-art DPP algorithms to sample multiple small subsets and use them in an ensemble of ridgeless regressions. Our first empirical results indicate that ensemble of ridgeless regressors can be interesting to use for datasets including redundant information. 
\end{abstract}
\end{adjustwidth}

\vspace{0.5cm}

\section{Introduction}
\label{sec:Introduction}

Recent work has shown numerous insightful connections between Determinantal Point Processes (DPPs) and implicit regularization which lead to new guarantees and improved algorithms. The so-called DPPs are probabilistic models of repulsion inspired from physics, which are capable of sampling diverse subsets. An extensive overview of the use of DPPs in randomized linear algebra can be found in~\cite{derezinski2020determinantal}. By utilizing DPPs, exact expressions for implicit regularization as well as connections to the double descent curve~\cite{belkin2019reconciling} were derived in~\cite{fanuel2020diversity,derezinski2019exact,CSSP}.
The nice theoretical properties of DPPs sparked the search for efficient sampling algorithms. This resulted in many alternative algorithms for DPPs to reduce the computational cost of preprocessing and/or sampling, including many approximate and heuristic approaches. Some examples are the  exact sampler without eigendecomposition of~\cite{desolneux2018exact,Poulson}, coreDPP of \cite{li2015efficient} or the DPP-VFX algorithm of \cite{derezinski2019exact}. The computational cost is often split in two parts: an initial preprocessing cost and subsequent sampling cost. The latter is typically smaller, which makes the previously mentioned algorithms especially useful for sampling multiple small subsets from a large dataset. 

\begin{figure*}[h]
	\centering
    \begin{minipage}{.248\textwidth}
        \centering
        \includegraphics[width=1\linewidth]{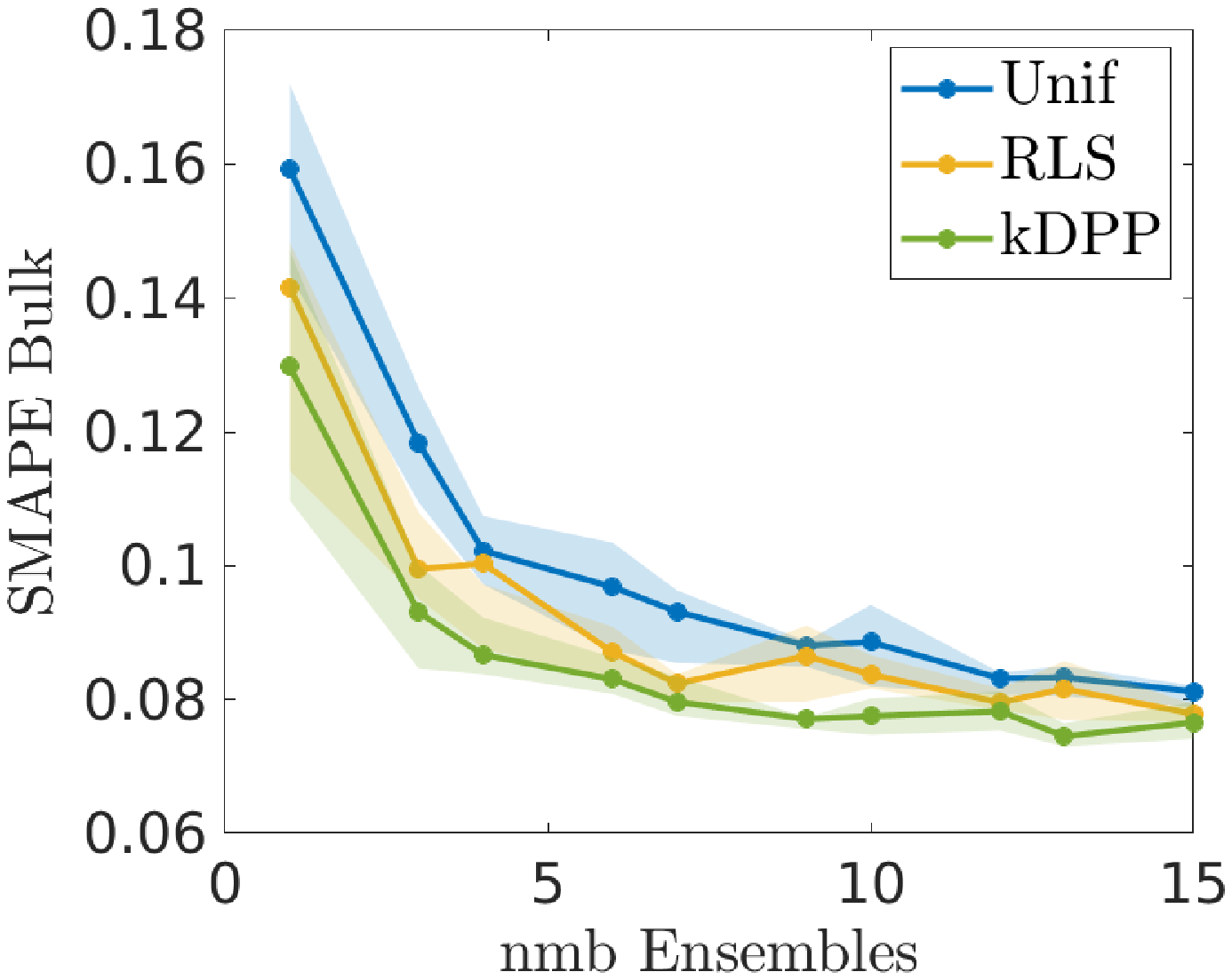}
    \end{minipage}%
    \begin{minipage}{0.248\textwidth}
        \centering
        \includegraphics[width=1\linewidth]{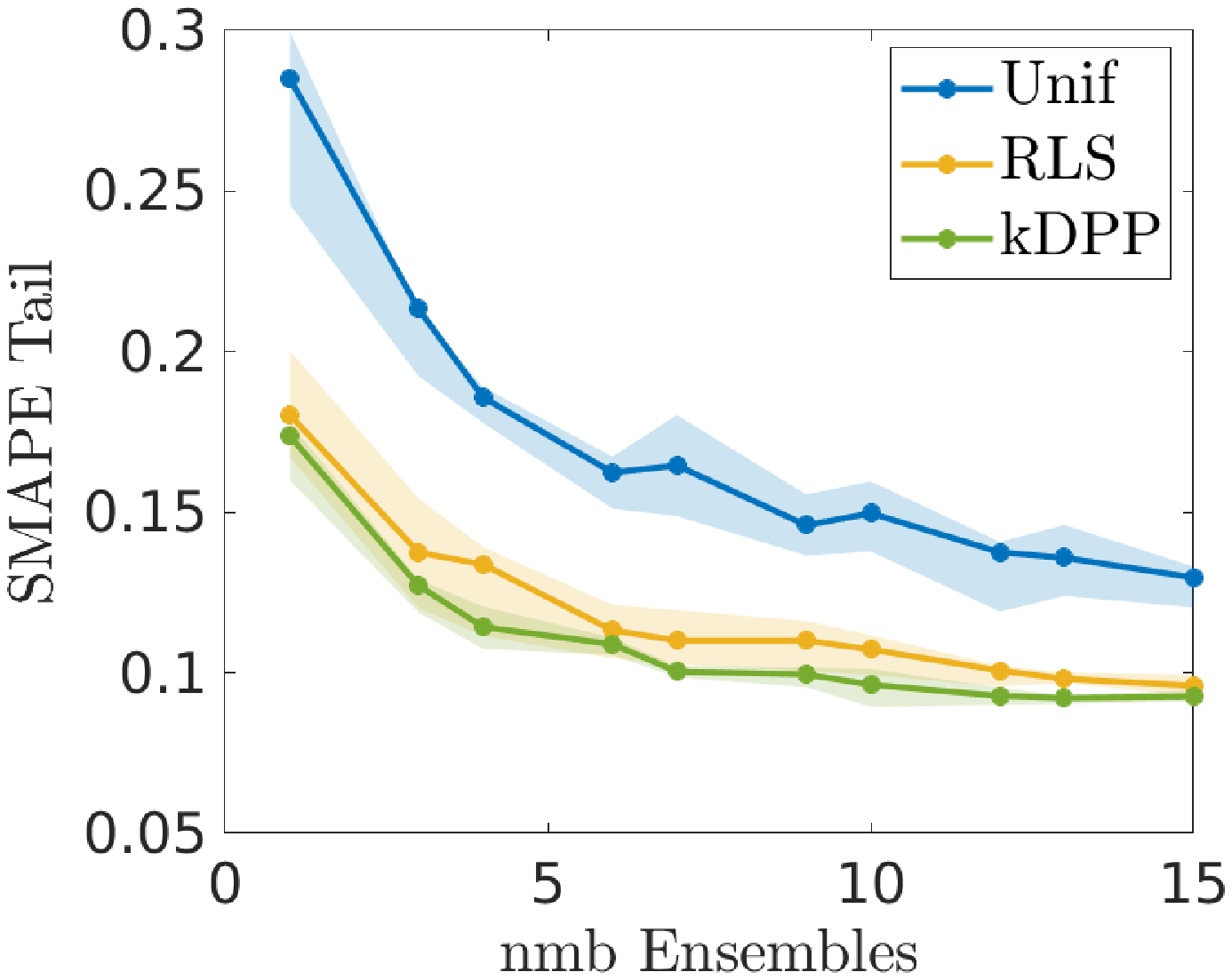}
    \end{minipage}
        \begin{minipage}{.248\textwidth}
        \centering
        \includegraphics[width=1\linewidth]{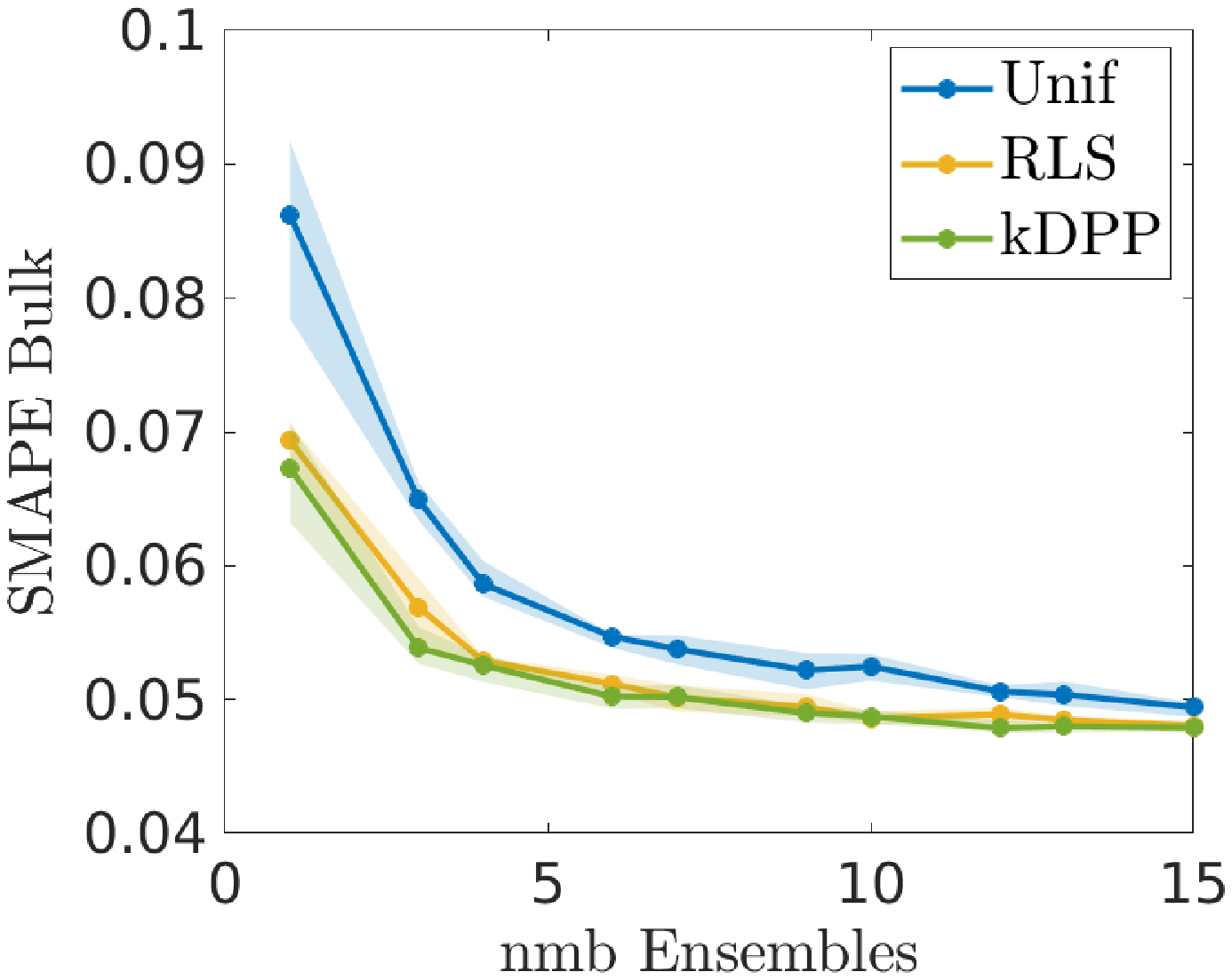}
    \end{minipage}%
    \begin{minipage}{0.248\textwidth}
        \centering
        \includegraphics[width=1\linewidth]{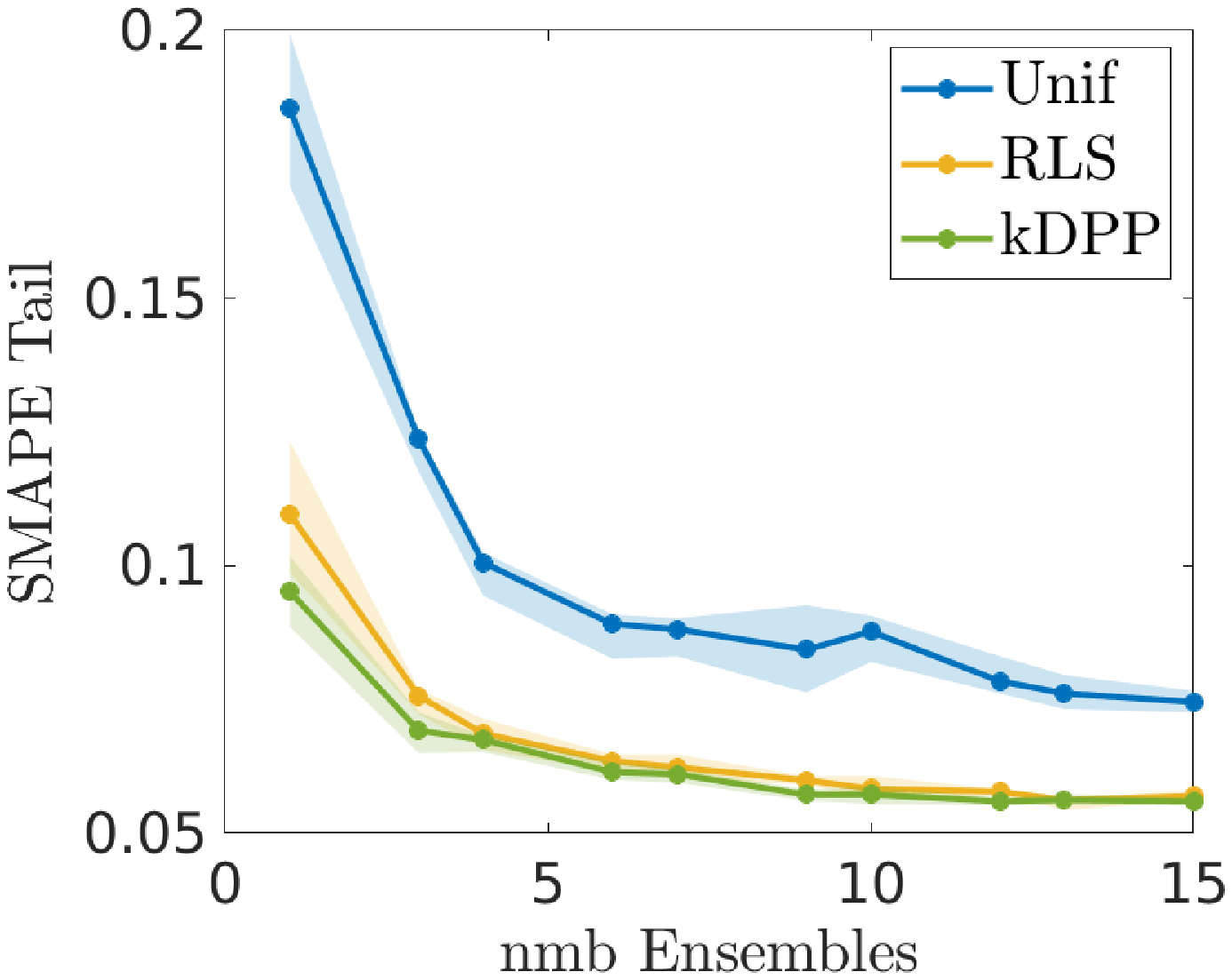}
    \end{minipage}
    \caption{Ensemble KRR on the Abalone and Wine Quality dataset (from left to right).The SMAPE on the bulk and tail of the dataset is given in function of the number of ensembles.}
	\label{fig:KKR:KKR}
\end{figure*}

We extend the work of~\cite{fanuel2020diversity}, where the role of \emph{diversity} within kernel methods was investigated. Namely, a more diverse subset results in implicit regularization, which in turn improves the performance of different kernel applications
More specifically we generalize the implicit regularization of DPPs to kDPPs, which are DPPs conditioned on a fixed subset size $k$~\cite{kulesza2011k}. Furthermore, we leverage the common setup of state of the art DPP sampling algorithms, to sample multiple small subsets and use them in an ensemble approach. One can loosely characterize these ensemble approaches as methods wherein the data points are divided into smaller subsets, and estimators are trained on the divisions. Their use has shown to improve performance in Nystr\"om approximation~\cite{kumar2009ensemble} and kernel ridge regression~\cite{zhang2013divide,hsieh2014divide,lin2017distributed}. Experiments show a reduction in  error when combining multiple diverse subsets.
\paragraph{Nystr\"om Approximation.}
Let $k(x,y)>0$ be a continuous and \emph{strictly} positive definite kernel. Examples are the Gaussian kernel $k(x,y)= \exp(-\|x-y\|_2^2/2\sigma^2)$ or Laplace Kernel $k(x,y)= \exp(-\|x-y\|_2/\sigma)$. Given data $\{x_i\in \mathbb{R}^d \}_{i\in [n]}$, kernel methods rely on the entries of the Gram matrix $K = [k(x_i,x_j)]_{i,j}$. By assumption, this Gram matrix is invertible.
However, to avoid inverting the full Gram matrix, one often samples a subset of landmarks $\mathcal{C}\subseteq [n]$ with $n\times |\mathcal{C}|$ a sampling matrix $C$ obtained by selecting the columns of the identity matrix indexed by $\mathcal{C}$. Next we define:  $K_{\mathcal{C}} = KC$ and $K_{\mathcal{C}\mathcal{C}} = C^\top K C$. Then, the $n\times n$ kernel matrix $K$ is approximated by a low rank Nystr\"om approximation
$L(K,\mathcal{C}) = K_{\mathcal{C}}K_{\mathcal{C}\mathcal{C}}^{-1} K_{\mathcal{C}}^\top,$
which involves inverting the smaller $K_{\mathcal{C}\mathcal{C}}$.

\paragraph{Ridgeless Kernel Regression.}
Given input-output pairs $\{(x_i,y_i)\in \mathbb{R}^d \times \mathbb{R}\}_{i\in [n]}$, we propose to solve
\begin{equation}
\label{eq:ridgeRegression}
f^\star_{\mathcal{C}}=  \arg\min_{f\in \mathcal{H}}\|f\|_{\mathcal{H}}^2, \text{ s.t. } y_i=f(x_i) \text{ for all } i\in \mathcal{C},
\end{equation}
where $\mathcal{C}\subseteq [n]$ is sampled by using a DPP. Here, $\mathcal{H}$ is the reproducing kernel Hibert space associated with $k$. The expression of the solution is
$
f^\star_{\mathcal{C}}(x) = \bm{k}_x^\top C K_{\mathcal{C}\mathcal{C}}^{-1} C^\top \bm{y},
$
where $\bm{k}_x = [k(x,x_1), \dots, k(x,x_n)]^\top$.
This approximation assumes that some data points can be omitted, contrary to Nystr\"om approximation to Kernel Ridge Regression (KRR) which uses all data points.
We show in this paper that averaging ridgeless regressors yield the solution of a regularized Kernel Ridge Regression calculated over the complete dataset.
For $\mathcal{C}\sim DPP(K/\alpha)$, the expectation of the rigdeless predictors (cfr. Theorem \ref{thm:implicit}) gives the function
\begin{equation} 
\label{eq:expKRR}
\mathbb{E}_{C}[f^\star_{\mathcal{C}} (x)]  = \bm{k}_x^\top (K+\alpha\mathbb{I})^{-1} \bm{y} =: f^\star(x) 
\end{equation} 
which is the solution of Kernel Ridge Regression with a ridge parameter associated to $\alpha$, namely
\[
f^\star =   \arg\min_{f\in \mathcal{H}} \sum_{i=1}^{n} (y_i-f(x_i))^2 + \alpha \|f\|_{\mathcal{H}}^2.
\]
Typically, a large $\alpha>0$ yields a small expected subset size for $DPP(K/\alpha)$.
In light of the expectation result of \eqref{eq:expKRR}, we propose to sample multiple subsets using a DPP and average the ridgeless predictors in an ensemble approach: $\bar{f}= \frac{1}{m} \sum_{i=1}^{m} f^\star_{\mathcal{C}_i}$ with $m$ the number of ensembles.

\paragraph{Determinantal Point Processes}
A more extensive overview of DPPs is given in~\cite{KuleszaT12}.
Let $L$ be a $n\times n$ positive definite symmetric matrix, called L-ensemble. The probability of sampling a subset $\mathcal{C}\subseteq [n]$ is defined as follows
$
\Pr(Y = \mathcal{C}) = \det(L_{\mathcal{C}\mathcal{C}})/\det(\mathbb{I}+L).
$
Where we define $L = K/\alpha$ with $\alpha>0$ and denote the associated process  $DPP_L(K/\alpha)$.
The inclusion probabilities are given by
$
\Pr( \mathcal{C}\subseteq Y) = \det(P_{\mathcal{C}\mathcal{C}}),
$
where
$
P = K(K+\alpha\mathbb{I})^{-1},\label{eq:P}
$ is the marginal kernel associated to the $L$-ensemble $L = K/\alpha$.
The diagonal of this soft projector matrix $P$ gives the Ridge Leverage Scores (RLS) of the data points:
$\bm{\ell}_i = P_{ii} \text{ for } i\in [n],$ which have been used to sample landmarks points in various works~\cite{Bach2013,ElAlaouiMahoney,MuscoMusco} in the context of Nystr\"om approximations. The RLS can be viewed as the importance or uniqueness of a data point. Connections between RLS, DPPs and Christoffel functions were explored in \cite{fanuel2019nystr}. Note that guarantees for DPP sampling for coresets have been derived in \cite{DPPCoresets}.


\section{Main results}
\subsection{DPP and implicit regularization}
Theorem \ref{thm:implicit} can be found in \cite{fanuel2020diversity} and \cite{DerezinskiAISTATS} in the context of kernel methods and stochastic optimization respectively. It relates the average of pseudo-inverse of kernel submatrices to a regularization inverse of the full kernel matrix.
\begin{theorem}[Implicit regularization]
Let $\mathcal{C}$ be a subset sampled according to $DPP(K/\alpha)$ with $K\succ 0$. Then, we have the identity
$
\mathbb{E}_{C}[C K_{\mathcal{C}\mathcal{C}}^{-1} C^\top] = (K+\alpha\mathbb{I})^{-1}.
$
\label{thm:implicit}
\end{theorem}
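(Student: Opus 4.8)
The plan is to strip away the probability normalization, reduce the claim to a purely algebraic identity for adjugates, and then verify that identity entry by entry. Since $K\succ 0$ is symmetric, $L:=K/\alpha$ is a genuine $L$-ensemble with $\Pr(\mathcal{C})=\det(L_{\mathcal{C}\mathcal{C}})/\det(\mathbb{I}+L)$ and $K_{\mathcal{C}\mathcal{C}}^{-1}=\tfrac1\alpha L_{\mathcal{C}\mathcal{C}}^{-1}$. Jacobi's identity $\det(L_{\mathcal{C}\mathcal{C}})\,L_{\mathcal{C}\mathcal{C}}^{-1}=\adj(L_{\mathcal{C}\mathcal{C}})$ cancels the determinant in $\Pr(\mathcal{C})$, so
\[
\mathbb{E}_C[C K_{\mathcal{C}\mathcal{C}}^{-1}C^\top]=\frac{1}{\alpha\det(\mathbb{I}+L)}\sum_{\mathcal{C}\subseteq[n]}C\,\adj(L_{\mathcal{C}\mathcal{C}})\,C^\top .
\]
It therefore suffices to establish the identity $\sum_{\mathcal{C}\subseteq[n]}C\,\adj(L_{\mathcal{C}\mathcal{C}})\,C^\top=\adj(\mathbb{I}+L)$; dividing by $\det(\mathbb{I}+L)$ and using $(\mathbb{I}+L)^{-1}/\alpha=(K+\alpha\mathbb{I})^{-1}$ then closes the argument. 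This reduced statement no longer mentions probabilities and is in fact a polynomial identity in the entries of $L$.

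To prove it I would compare entries. Since $C\,\adj(L_{\mathcal{C}\mathcal{C}})\,C^\top$ is supported on the $\mathcal{C}\times\mathcal{C}$ block, its $(i,j)$ entry is $\adj(L_{\mathcal{C}\mathcal{C}})_{\mathrm{pos}_{\mathcal{C}}(i),\mathrm{pos}_{\mathcal{C}}(j)}$ whenever $i,j\in\mathcal{C}$ (and $0$ otherwise), where $\mathrm{pos}_{\mathcal{C}}(i)$ denotes the rank of $i$ within $\mathcal{C}$; the cofactor formula rewrites this as $(-1)^{\mathrm{pos}_{\mathcal{C}}(i)+\mathrm{pos}_{\mathcal{C}}(j)}\det\!\big(L_{\mathcal{C}\setminus\{j\},\,\mathcal{C}\setminus\{i\}}\big)$. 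On the other side, $\adj(\mathbb{I}+L)_{ij}=(-1)^{i+j}\det\!\big((\mathbb{I}+L)_{[n]\setminus\{j\},\,[n]\setminus\{i\}}\big)$, and expanding this $(n-1)\times(n-1)$ minor by multilinearity in its rows — keeping in each row either the $\mathbb{I}$-part or the $L$-part — the $\mathbb{I}$-part can contribute only in rows indexed by $[n]\setminus\{i,j\}$, and selecting it on a subset $T\subseteq[n]\setminus\{i,j\}$ yields, via a generalized Laplace expansion, the term $\epsilon(T)\,\det\!\big(L_{[n]\setminus(\{j\}\cup T),\,[n]\setminus(\{i\}\cup T)}\big)$ for an explicit sign $\epsilon(T)$.

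The match is then governed by the bijection $\mathcal{C}\leftrightarrow T:=[n]\setminus\mathcal{C}$ between the two index sets. Under it the minors coincide outright, because $\mathcal{C}\setminus\{j\}=[n]\setminus(\{j\}\cup T)$ and $\mathcal{C}\setminus\{i\}=[n]\setminus(\{i\}\cup T)$, so the entire content is a sign check: one computes $\mathrm{pos}_{\mathcal{C}}(i)=i-|\{t\in T:t<i\}|$ (and likewise for $j$), reads $\epsilon(T)=(-1)^{|\{t\in T:t>i\}|+|\{t\in T:t>j\}|}$ off the Laplace expansion, and verifies — using $i,j\notin T$ so that $|\{t<i\}|+|\{t>i\}|=|T|$ — that $(-1)^{i+j}\epsilon(T)=(-1)^{\mathrm{pos}_{\mathcal{C}}(i)+\mathrm{pos}_{\mathcal{C}}(j)}$, i.e.\ the parities cancel term by term. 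The diagonal case $i=j$ carries no signs at all and is just the classical expansion $\det(\mathbb{I}+A)=\sum_{S\subseteq[n]\setminus\{i\}}\det(A_{SS})$ applied to $A=L_{[n]\setminus\{i\},[n]\setminus\{i\}}$.

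I expect the only real obstacle to be this sign bookkeeping. The cofactor signs on the left depend on where $i$ and $j$ sit inside the random set $\mathcal{C}$, while the signs on the right arise from a generalized Laplace expansion indexed by the complementary set $T$; reconciling the two — in particular getting the Laplace sign correct when several rows and columns are deleted simultaneously — is the step needing attention, everything else being elementary. A less explicit alternative would be an induction on $n$ conditioning on whether $n\in\mathcal{C}$ (a conditioned DPP is again a DPP, with a Schur-complemented kernel), but keeping track of how the conditioned kernels embed into $(K+\alpha\mathbb{I})^{-1}$ is of comparable length, so I would favour the entrywise route.
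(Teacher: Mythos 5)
Your proposal is correct in substance, but it takes a genuinely different (and more laborious) route than the paper. Note first that the paper does not actually prove Theorem~\ref{thm:implicit}; it cites \cite{fanuel2020diversity} and \cite{DerezinskiAISTATS}. The technique the paper does exhibit, in the proof of Lemma~\ref{lem:Expk}, is the matrix determinant lemma: write $\bm{u}^\top C K_{\mathcal{C}\mathcal{C}}^{-1}C^\top\bm{w} = \bigl(\det(K_{\mathcal{C}\mathcal{C}})-\det(K_{\mathcal{C}\mathcal{C}}-C^\top\bm{w}\bm{u}^\top C)\bigr)/\det(K_{\mathcal{C}\mathcal{C}})$, so that the normalizing determinant in $\Pr(\mathcal{C})$ cancels and the sum over subsets collapses via $\sum_{\mathcal{C}}\det(A_{\mathcal{C}\mathcal{C}})=\det(\mathbb{I}+A)$ to $1-\det\bigl(\mathbb{I}-(\mathbb{I}+K/\alpha)^{-1}\bm{w}\bm{u}^\top/\alpha\bigr)=\bm{u}^\top(K+\alpha\mathbb{I})^{-1}\bm{w}$; this proves the theorem in three lines with no signs to track. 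Your reduction to the polynomial identity $\sum_{\mathcal{C}}C\,\adj(L_{\mathcal{C}\mathcal{C}})\,C^\top=\adj(\mathbb{I}+L)$ is valid (it checks out for small $n$ and is exactly the scalarized identity above with $\bm{u},\bm{w}$ ranging over standard basis vectors), and your entrywise cofactor-plus-Laplace verification would go through — your parity computation $\mathrm{pos}_{\mathcal{C}}(i)+\mathrm{pos}_{\mathcal{C}}(j)\equiv i+j+|\{t\in T:t>i\}|+|\{t\in T:t>j\}|\pmod 2$ is right — but the sign bookkeeping you correctly identify as the main hazard is precisely what the rank-one determinant update sidesteps. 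A middle ground that rescues your identity without any Laplace signs: differentiate $\det(\mathbb{I}+L)=\sum_{\mathcal{C}}\det(L_{\mathcal{C}\mathcal{C}})$ with respect to $L_{ji}$ (treating entries as independent indeterminates), using $\partial\det(A)/\partial A_{ji}=\adj(A)_{ij}$ on both sides. What your approach buys is generality and transparency at the level of linear algebra; what the paper's approach buys is brevity and immediate extensibility to the $k$DPP case, where the same determinant-difference trick combined with $\sum_{|\mathcal{C}|=k}\det(A_{\mathcal{C}\mathcal{C}})=e_k(A)$ yields Lemma~\ref{lem:Expk}.
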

 Interestingly, a large regularization parmeter $\alpha>0$ corresponds to small expected subset size $\mathbb{E}[|\mathcal{C}|] = \Tr\left(K(K+\alpha\mathbb{I})
^{-1}\right)$. We now discuss an analogous result in the case of kDPPs, for which the implicit regularization effect can be observed.
\subsection{Analogous result for kDPP sampling}
The elementary symmetric polynomial $e_k(K)$ is proportional to the $(n-k)$-th coefficients of the characteristic polynomial
$
\det(t\mathbb{I}-K) = \sum_{k=0}^{n}(-1)^{k}e_{k}(K)t^{n-k}.
$
Those polynomials are defined on the vector $\bm{\lambda}$ of eigenvalues of $K$.
There are explicitly given by the formula $e_k(\bm{\lambda})=\sum_{1\leq i_1<\dots <i_k\leq n}\lambda_{i_1}\dots\lambda_{i_k}.$
The kDPPs(K) are defined by the subset probabilities
$
\Pr(Y = \mathcal{C}) = \det(K_{\mathcal{C}\mathcal{C}})/e_k(K),
$
and corresponds to DPPs conditioned to a fixed subset size $k$. Now, we state a result analogous to Theorem \ref{thm:implicit}.
\begin{lemma}
Let $\mathcal{C}\sim kDPP(K)$ and $\bm{u},\bm{w}\in\mathbb{R}^n$. We have the identities
\begin{align*}
    &\mathbb{E}_{C}[\bm{u}^\top C K_{\mathcal{C}\mathcal{C}}^{-1}C^\top \bm{w}] = \frac{e_k(K)-e_k(K-\bm{w}\bm{u}^\top)}{e_{k}(K)}\\
    &=\frac{(-1)^{k+1}}{(n-k)!}\frac{\rmd^{(n-k)}}{\rmd t^{n-k}}\left[\frac{\bm{u}^\top \adj(t\mathbb{I}-K)\bm{w}}{e_k(K)}\right]_{t=0},
\end{align*}\label{lem:Expk}
where $\adj$ is the adjugate of a matrix.
\end{lemma}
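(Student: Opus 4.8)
The plan is to write the expectation as an explicit finite sum over subsets $\mathcal{C}$ of size $k$ and reorganise it using classical determinantal identities. Because $K\succ 0$, every principal submatrix $K_{\mathcal{C}\mathcal{C}}$ is invertible and $e_k(K)>0$, so the kDPP is well defined; writing $\bm{u}^\top C=\bm{u}_{\mathcal{C}}^\top$ and $C^\top\bm{w}=\bm{w}_{\mathcal{C}}$ for the restrictions to $\mathcal{C}$ and using Cramer's rule $\det(A)\,A^{-1}=\adj(A)$, one obtains
\[
\mathbb{E}_{C}[\bm{u}^\top C K_{\mathcal{C}\mathcal{C}}^{-1}C^\top \bm{w}]
=\frac{1}{e_k(K)}\sum_{|\mathcal{C}|=k}\det(K_{\mathcal{C}\mathcal{C}})\,\bm{u}_{\mathcal{C}}^\top K_{\mathcal{C}\mathcal{C}}^{-1}\bm{w}_{\mathcal{C}}
=\frac{1}{e_k(K)}\sum_{|\mathcal{C}|=k}\bm{u}_{\mathcal{C}}^\top\adj(K_{\mathcal{C}\mathcal{C}})\bm{w}_{\mathcal{C}}.
\]

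Next I would apply the matrix determinant lemma in its adjugate form, $\det(A+\bm{p}\bm{q}^\top)=\det(A)+\bm{q}^\top\adj(A)\bm{p}$, which is valid for an arbitrary square matrix $A$ (it follows from Jacobi's formula together with the fact that $s\mapsto\det(A+s\bm{p}\bm{q}^\top)$ is affine in $s$). Taking $A=K_{\mathcal{C}\mathcal{C}}$, $\bm{p}=-\bm{w}_{\mathcal{C}}$, $\bm{q}=\bm{u}_{\mathcal{C}}$ gives $\bm{u}_{\mathcal{C}}^\top\adj(K_{\mathcal{C}\mathcal{C}})\bm{w}_{\mathcal{C}}=\det(K_{\mathcal{C}\mathcal{C}})-\det(K_{\mathcal{C}\mathcal{C}}-\bm{w}_{\mathcal{C}}\bm{u}_{\mathcal{C}}^\top)$. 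Since $K_{\mathcal{C}\mathcal{C}}-\bm{w}_{\mathcal{C}}\bm{u}_{\mathcal{C}}^\top=(K-\bm{w}\bm{u}^\top)_{\mathcal{C}\mathcal{C}}$, summing over all size-$k$ subsets and invoking $\sum_{|\mathcal{C}|=k}\det(M_{\mathcal{C}\mathcal{C}})=e_k(M)$ (the sum of $k\times k$ principal minors is, up to sign, the coefficient of $t^{n-k}$ in $\det(t\mathbb{I}-M)$, i.e.\ the elementary symmetric polynomial of the eigenvalues) converts the two sums into $e_k(K)$ and $e_k(K-\bm{w}\bm{u}^\top)$, which is exactly the first identity.

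For the second identity I would run the same determinant lemma with $A=t\mathbb{I}-K$, $\bm{p}=\bm{w}$, $\bm{q}=\bm{u}$, noting $t\mathbb{I}-K+\bm{w}\bm{u}^\top=t\mathbb{I}-(K-\bm{w}\bm{u}^\top)$, and then expand both characteristic polynomials as in the text:
\[
\bm{u}^\top\adj(t\mathbb{I}-K)\bm{w}
=\det\!\big(t\mathbb{I}-(K-\bm{w}\bm{u}^\top)\big)-\det(t\mathbb{I}-K)
=\sum_{j=0}^{n}(-1)^{j}\big(e_{j}(K-\bm{w}\bm{u}^\top)-e_{j}(K)\big)t^{n-j}.
\]
The left-hand side has degree at most $n-1$ in $t$, so the coefficient of $t^{n-k}$ equals $\tfrac{1}{(n-k)!}$ times its $(n-k)$-th derivative at $t=0$; comparing with the $j=k$ term on the right yields $e_k(K)-e_k(K-\bm{w}\bm{u}^\top)=\tfrac{(-1)^{k+1}}{(n-k)!}\,\tfrac{\rmd^{(n-k)}}{\rmd t^{n-k}}\big[\bm{u}^\top\adj(t\mathbb{I}-K)\bm{w}\big]_{t=0}$, and dividing by the constant $e_k(K)$ (which may be pulled inside the bracket) gives the stated formula.

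The steps are all elementary, so the only real work is bookkeeping: keeping the signs in the determinant lemma and in the characteristic-polynomial expansion consistent, justifying the adjugate form of the lemma for the possibly singular matrix $(K-\bm{w}\bm{u}^\top)_{\mathcal{C}\mathcal{C}}$, and being careful that $\sum_{|\mathcal{C}|=k}\det(M_{\mathcal{C}\mathcal{C}})=e_k(M)$ is applied to the non-symmetric matrix $M=K-\bm{w}\bm{u}^\top$, for which $e_k$ must be read as a characteristic-polynomial coefficient (equivalently, $e_k$ evaluated on the possibly complex eigenvalues).
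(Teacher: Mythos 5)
Your proposal is correct and follows essentially the same route as the paper's proof: the matrix determinant lemma applied first to $K_{\mathcal{C}\mathcal{C}}$ and then to $t\mathbb{I}-K$, the identity $\sum_{|\mathcal{C}|=k}\det(M_{\mathcal{C}\mathcal{C}})=e_k(M)$, and extraction of the $t^{n-k}$ coefficient (your direct coefficient comparison is the same operation as the paper's $(n-k)$-th derivative at $t=0$). Your extra care with the adjugate form of the lemma for possibly singular submatrices and with reading $e_k$ of the non-symmetric $K-\bm{w}\bm{u}^\top$ as a characteristic-polynomial coefficient is a welcome tightening but not a different argument.
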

\begin{proof}
Firstly, we use the matrix determinant lemma:
\[
\bm{u}^\top C K_{\mathcal{C}\mathcal{C}}^{-1}C^\top \bm{w} = \frac{\det(K_{\mathcal{C}\mathcal{C}})-\det(K_{\mathcal{C}\mathcal{C}}-C^\top \bm{w}\bm{u}^\top C)}{\det(K_{\mathcal{C}\mathcal{C}})}.
\]
By taking the expectation over $\mathcal{C}\sim \text{kDPP}(K)$, we find
\[
\mathbb{E}[\bm{u}^\top C K_{\mathcal{C}\mathcal{C}}^{-1}C^\top \bm{w}] = \frac{e_k(K)-e_k(K-\bm{w} \bm{u}^\top)}{e_k(K)} =: \mathcal{E}.
\]
where we used that $\sum_{|\mathcal{C}| =k} \det A_{\mathcal{C}\mathcal{C}} = e_k(A)$ for any square matrix $A$. Next, we use the identity $e_k(K) = \frac{(-1)^k}{(n-k)!}\frac{\rmd^{(n-k)}}{\rmd t^{n-k}} [\det(t\mathbb{I}-K)]_{t=0}$ to obtain the corresponding coefficient of the polynomial $
\det(t\mathbb{I}-K) = \sum_{k=0}^{n}(-1)^{k}e_{k}(K)t^{n-k}
$. Then, we use once more the matrix determinant lemma with the matrix $(t\mathbb{I} -K)$ this time. This gives
\begin{align*}
    \mathcal{E} =\frac{(-1)^{k-1}}{(n-k)!}\frac{\rmd^{(n-k)}}{\rmd t^{n-k}}\left[\frac{\bm{u}^\top (t\mathbb{I}-K)^{-1}\bm{w}}{e_k(K)}\det(t\mathbb{I} -K)\right]_{t=0}.
\end{align*}
Finally, we recall that $\adj(A) = \det(A)A^{-1}$, which completes the proof.
\end{proof}
The implicit regularization due to the diverse sampling is not explicit in Lemma \ref{lem:Expk}. In order to clarify this formula, we write first an equivalent expression for it.
Let the eigendecomposition of $K$ be $K =\sum_{\ell=1}^n \lambda_\ell \bm{v}_\ell \bm{v}_\ell^\top$. Denote by $\bm{\lambda}\in\mathbb{R}^n$ the vector containing the eigenvalues of $K$, sorted such that $\lambda_1\geq \dots\geq \lambda_n$. Let $\bm{\lambda}_{\hat{k}}\in\mathbb{R}^{n-1}$ be the same vector with $\lambda_k$ missing. 
\begin{corollary}
Let $\mathcal{C}\sim kDPP(K)$. We have the identity:
\begin{equation}
\mathbb{E}_{C}[C K_{\mathcal{C}\mathcal{C}}^{-1}C^\top] = \sum_{\ell=1}^{n} \frac{\bm{v}_\ell \bm{v}_\ell^\top}{\lambda_\ell +\frac{e_{k}(\bm{\lambda}_{\hat{\ell}})}{e_{k-1}(\bm{\lambda}_{\hat{\ell}})}}.\label{eq:kDPPreg}
\end{equation}\label{Corollary:spectral}
\end{corollary}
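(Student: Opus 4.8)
The plan is to specialize Lemma~\ref{lem:Expk} to the rank-one choices $\bm{u}=\bm{w}=\bm{v}_\ell$, evaluate the resulting elementary-symmetric-polynomial ratio in closed form, and then reassemble the spectral sum. Since $\{\bm{v}_\ell\}_{\ell=1}^n$ is an orthonormal eigenbasis of $K$, the matrix $\mathbb{E}_C[C K_{\mathcal{C}\mathcal{C}}^{-1} C^\top]$ is determined by its quadratic forms $\bm{v}_\ell^\top \mathbb{E}_C[C K_{\mathcal{C}\mathcal{C}}^{-1} C^\top] \bm{v}_m$ for all $\ell,m$; I would first argue these vanish for $\ell\neq m$ (by symmetry/invariance, or directly from the formula below applied to $\bm{u}=\bm{v}_\ell$, $\bm{w}=\bm{v}_m$, using that $K-\bm{v}_m\bm{v}_\ell^\top$ then has the same characteristic polynomial as $K$ when $\ell\neq m$, so the numerator $e_k(K)-e_k(K-\bm{v}_m\bm{v}_\ell^\top)$ is zero), so that the expectation is diagonal in the eigenbasis and equals $\sum_\ell c_\ell \,\bm{v}_\ell\bm{v}_\ell^\top$ with $c_\ell = \mathbb{E}_C[\bm{v}_\ell^\top C K_{\mathcal{C}\mathcal{C}}^{-1} C^\top \bm{v}_\ell]$.

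Next I would compute $c_\ell$. By Lemma~\ref{lem:Expk} with $\bm{u}=\bm{w}=\bm{v}_\ell$,
\[
c_\ell = \frac{e_k(K) - e_k(K-\bm{v}_\ell\bm{v}_\ell^\top)}{e_k(K)}.
\]
The matrix $K-\bm{v}_\ell\bm{v}_\ell^\top$ has the same eigenvectors as $K$, with eigenvalue $\lambda_\ell-1$ in the direction $\bm{v}_\ell$ and $\lambda_j$ otherwise; hence $e_k(K-\bm{v}_\ell\bm{v}_\ell^\top)=e_k\big(\bm{\lambda} \text{ with } \lambda_\ell \mapsto \lambda_\ell-1\big)$. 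Using the defining recursion of elementary symmetric polynomials in the single variable $\lambda_\ell$ — namely $e_k(\bm{\lambda}) = e_k(\bm{\lambda}_{\hat\ell}) + \lambda_\ell\, e_{k-1}(\bm{\lambda}_{\hat\ell})$, where $\bm{\lambda}_{\hat\ell}$ omits the $\ell$-th eigenvalue — we get
\[
e_k(K) - e_k(K-\bm{v}_\ell\bm{v}_\ell^\top) = \big(\lambda_\ell - (\lambda_\ell-1)\big)\, e_{k-1}(\bm{\lambda}_{\hat\ell}) = e_{k-1}(\bm{\lambda}_{\hat\ell}),
\]
so that $c_\ell = e_{k-1}(\bm{\lambda}_{\hat\ell})/e_k(K)$. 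Applying the same recursion once more to the denominator, $e_k(K) = e_k(\bm{\lambda}_{\hat\ell}) + \lambda_\ell\, e_{k-1}(\bm{\lambda}_{\hat\ell})$, and dividing numerator and denominator by $e_{k-1}(\bm{\lambda}_{\hat\ell})$ yields
\[
c_\ell = \frac{1}{\lambda_\ell + \frac{e_k(\bm{\lambda}_{\hat\ell})}{e_{k-1}(\bm{\lambda}_{\hat\ell})}},
\]
which is exactly the claimed coefficient; summing $c_\ell\,\bm{v}_\ell\bm{v}_\ell^\top$ over $\ell$ gives \eqref{eq:kDPPreg}.

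The main obstacle — really the only subtlety — is the off-diagonal vanishing and the well-definedness of the denominators. For the latter, one should note $e_{k-1}(\bm{\lambda}_{\hat\ell})>0$ whenever $1\le k\le n$ and $K\succ 0$, so no division by zero occurs and the ``effective ridge'' $e_k(\bm{\lambda}_{\hat\ell})/e_{k-1}(\bm{\lambda}_{\hat\ell})\ge 0$ is a genuine nonnegative regularization term. For the off-diagonal part, the cleanest route is an invariance argument: if $O$ is orthogonal and commutes with $K$, then the kDPP law is invariant under $x_i \mapsto$ the corresponding transformation, forcing $\mathbb{E}_C[C K_{\mathcal{C}\mathcal{C}}^{-1}C^\top]$ to commute with $K$ and hence to be diagonal in any eigenbasis on each eigenspace; on a degenerate eigenspace one then picks the orthonormal basis freely and the stated formula is basis-independent there since $e_k(\bm{\lambda}_{\hat\ell})/e_{k-1}(\bm{\lambda}_{\hat\ell})$ depends only on $\lambda_\ell$. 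Alternatively, one computes the off-diagonal quadratic form directly from Lemma~\ref{lem:Expk} and observes the numerator is a difference of characteristic-polynomial coefficients of two matrices related by a nilpotent rank-one update (when $\ell\ne m$, $\bm{v}_m\bm{v}_\ell^\top$ has all eigenvalues $0$), which leaves every $e_k$ unchanged.
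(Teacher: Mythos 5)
Your proof is correct, but it takes a genuinely different route from the paper's. The paper works with the \emph{second} identity in Lemma~\ref{lem:Expk}: it expands $\adj(t\mathbb{I}-K)=\sum_{\ell}\prod_{\ell'\neq\ell}(t-\lambda_{\ell'})\,\bm{v}_\ell\bm{v}_\ell^\top$ in the eigenbasis, reads off $e_{k-1}(\bm{\lambda}_{\hat{\ell}})$ as the relevant Taylor coefficient of $\prod_{\ell'\neq\ell}(t-\lambda_{\ell'})$ at $t=0$, and thereby obtains $\mathbb{E}_{C}[CK_{\mathcal{C}\mathcal{C}}^{-1}C^\top]=\sum_{\ell}\frac{e_{k-1}(\bm{\lambda}_{\hat{\ell}})}{e_k(\bm{\lambda})}\bm{v}_\ell\bm{v}_\ell^\top$ before applying the recurrence $e_k(\bm{\lambda})=\lambda_\ell e_{k-1}(\bm{\lambda}_{\hat{\ell}})+e_k(\bm{\lambda}_{\hat{\ell}})$; diagonality in the eigenbasis is automatic there because the adjugate expansion is already diagonal. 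You instead use only the \emph{first} identity of Lemma~\ref{lem:Expk}, evaluated on pairs of eigenvectors: the diagonal coefficients come from the shift $\lambda_\ell\mapsto\lambda_\ell-1$ together with the single-variable recursion, and the off-diagonal ones vanish because $\det(t\mathbb{I}-K+\bm{v}_m\bm{v}_\ell^\top)=\det(t\mathbb{I}-K)\bigl(1+\bm{v}_\ell^\top(t\mathbb{I}-K)^{-1}\bm{v}_m\bigr)=\det(t\mathbb{I}-K)$ when $\ell\neq m$. You land on the same intermediate expression $c_\ell=e_{k-1}(\bm{\lambda}_{\hat{\ell}})/e_k(\bm{\lambda})$ and finish identically, so your argument is arguably more elementary (no derivatives, no adjugate) at the cost of having to treat the off-diagonal terms explicitly. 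One caution: your closing remark that a nilpotent rank-one update ``leaves every $e_k$ unchanged'' is not a valid general principle --- adding a nilpotent matrix can change the spectrum --- and what actually saves you is the matrix-determinant-lemma computation just displayed, which you correctly invoke earlier in the argument; I would drop the nilpotency phrasing. Your observation that $e_{k-1}(\bm{\lambda}_{\hat{\ell}})>0$ for $K\succ 0$ and $1\leq k\leq n$, so that no division by zero occurs, is a welcome detail the paper leaves implicit.
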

\begin{proof}
To begin with, we expand the adjugate in Lemma \ref{lem:Expk} in the basis of eigenvectors of $K$. This gives
\[
    \adj(t\mathbb{I}-K) = \sum_{\ell =1}^{n} \frac{\prod_{\ell'=1}^n (t-\lambda_{\ell'})}{t-\lambda_{\ell}} \bm{v}_\ell \bm{v}_\ell^\top
\]
Then, by the definition of the polynomials $e_k$ and by noting that $n-k = n-1-(k-1)$, we find
\[
\frac{(-1)^{k-1}}{(n-k)!}\frac{\rmd^{(n-k)}}{\rmd t^{n-k}}\left[\prod_{\ell'\neq \ell} (t-\lambda_{\ell'})\right]_{t=0} = e_{k-1}(\bm{\lambda}_{\hat{\ell}}),
\]
where $\bm{\lambda}_{\hat{\ell}}\in\mathbb{R}^{n-1}$ is the vector $\bm{\lambda}\in\mathbb{R}^{n}$ with $\lambda_{\ell}$ missing. This yields
$
\mathbb{E}_{C}[C K_{\mathcal{C}\mathcal{C}}^{-1}C^\top] = \sum_{\ell =1}^{n} \frac{e_{k-1}(\bm{\lambda}_{\hat{\ell}})}{e_k(\bm{\lambda})} \bm{v}_\ell \bm{v}_\ell^\top.
$
The final identity is obtained by using the following recurrence relation $e_k(\bm{\lambda}) = \lambda_{\ell} e_{k-1}(\bm{\lambda}_{\hat{\ell}}) +  e_k(\bm{\lambda}_{\hat{\ell}})$.
\end{proof}

\begin{figure*}[h]
	\centering
    \begin{minipage}{.248\textwidth}
        \centering
        \includegraphics[width=1\linewidth]{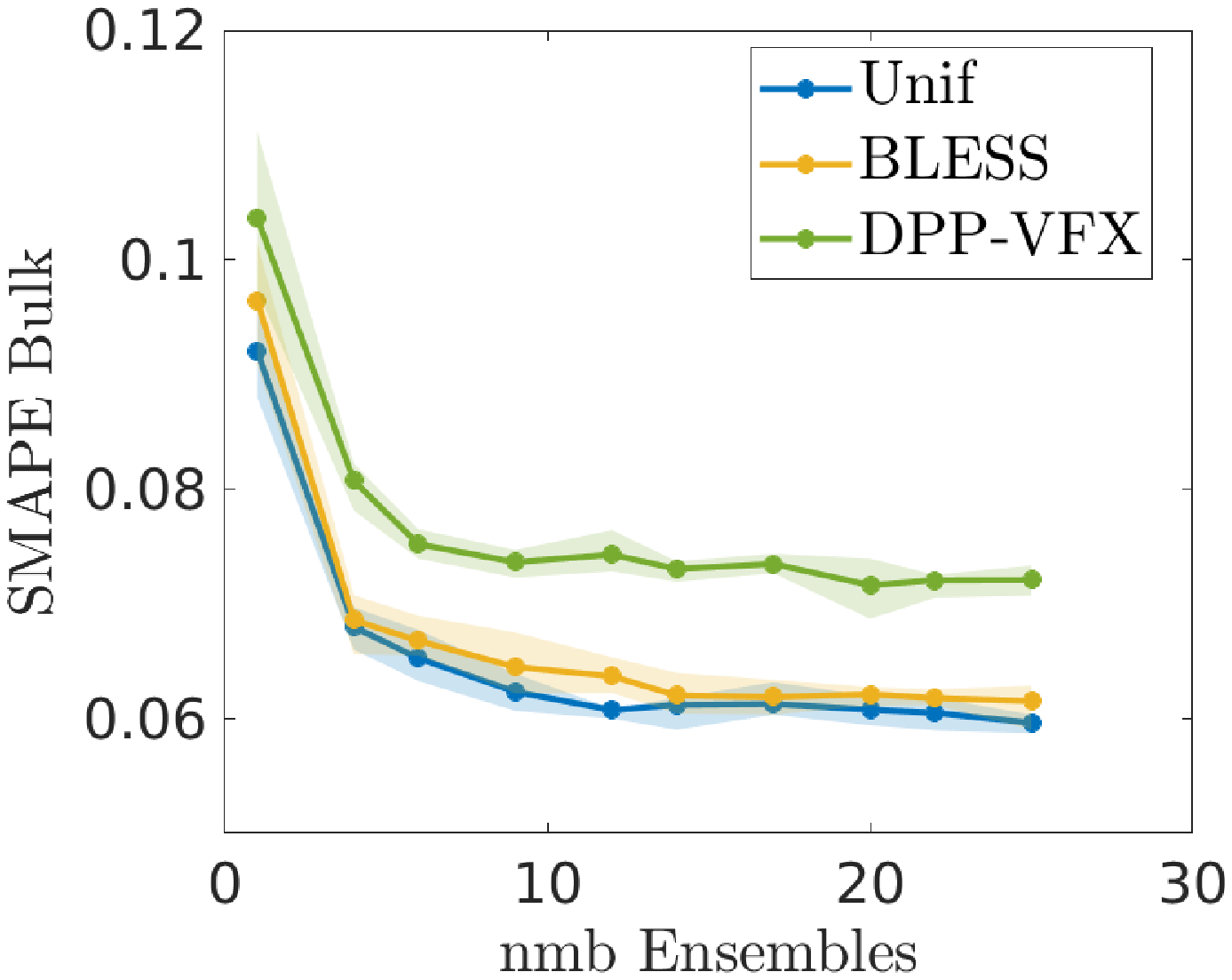}
    \end{minipage}%
    \begin{minipage}{0.248\textwidth}
        \centering
        \includegraphics[width=1\linewidth]{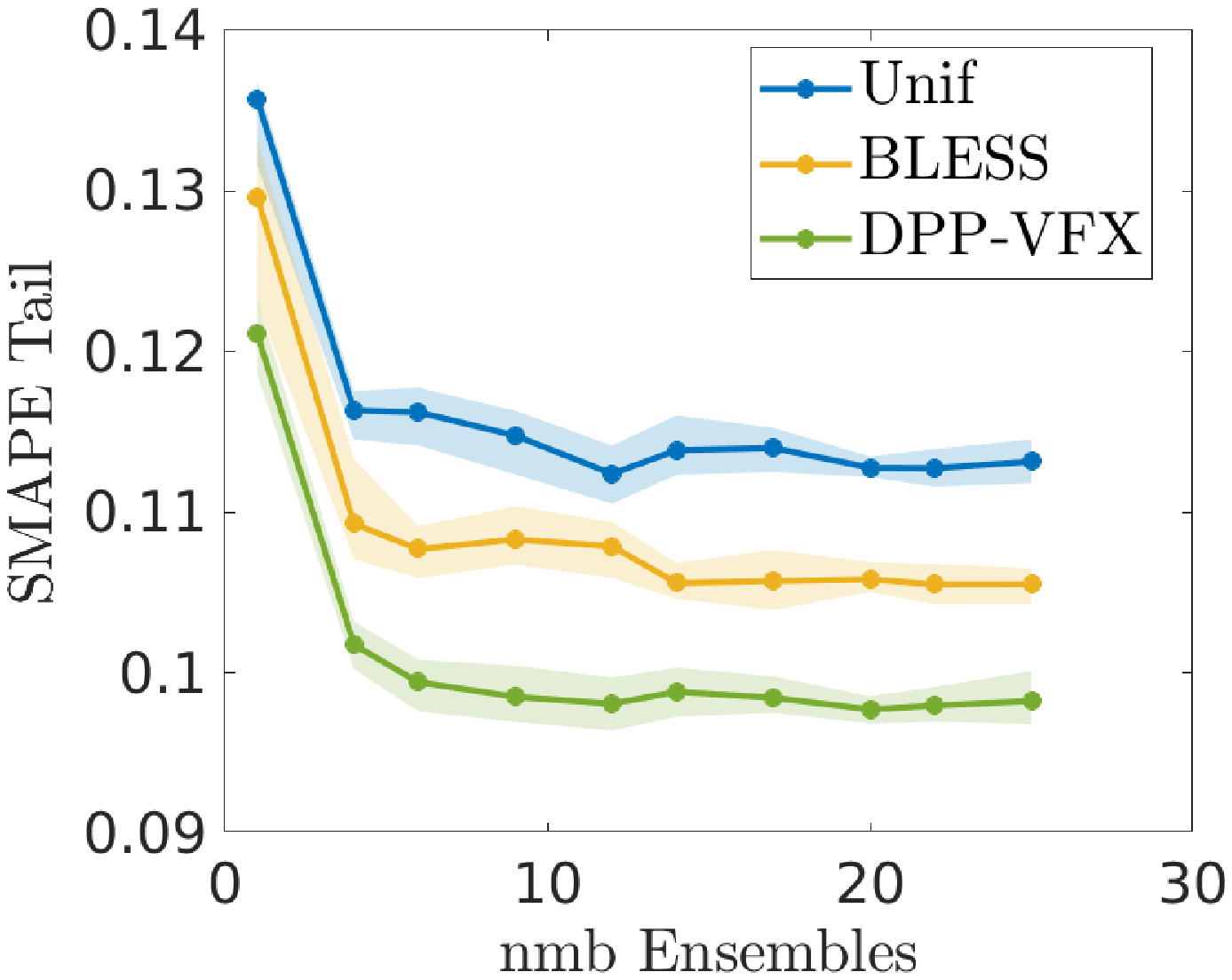}
    \end{minipage}
        \begin{minipage}{.248\textwidth}
        \centering
        \includegraphics[width=1\linewidth]{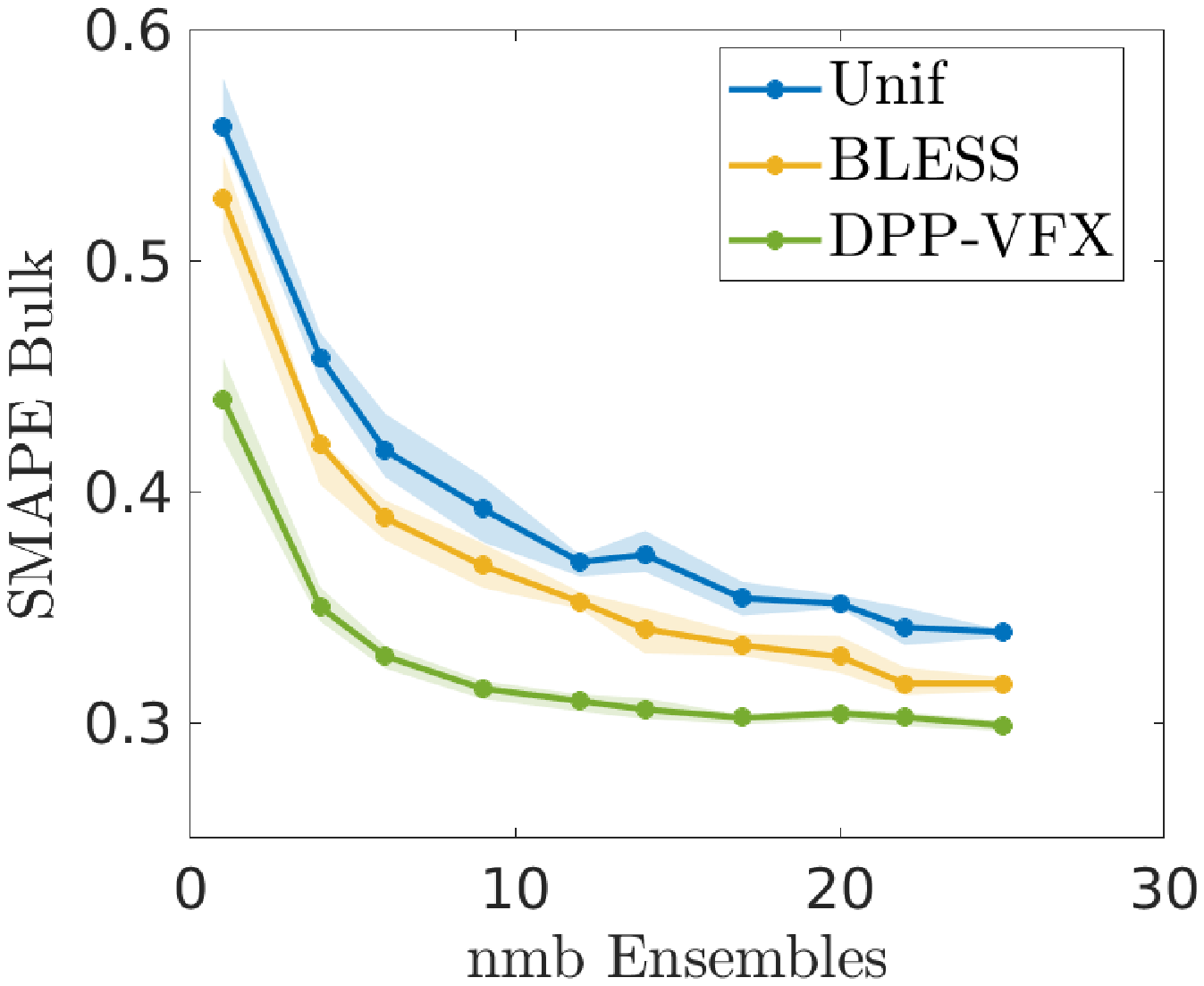}
    \end{minipage}%
    \begin{minipage}{0.248\textwidth}
        \centering
        \includegraphics[width=1\linewidth]{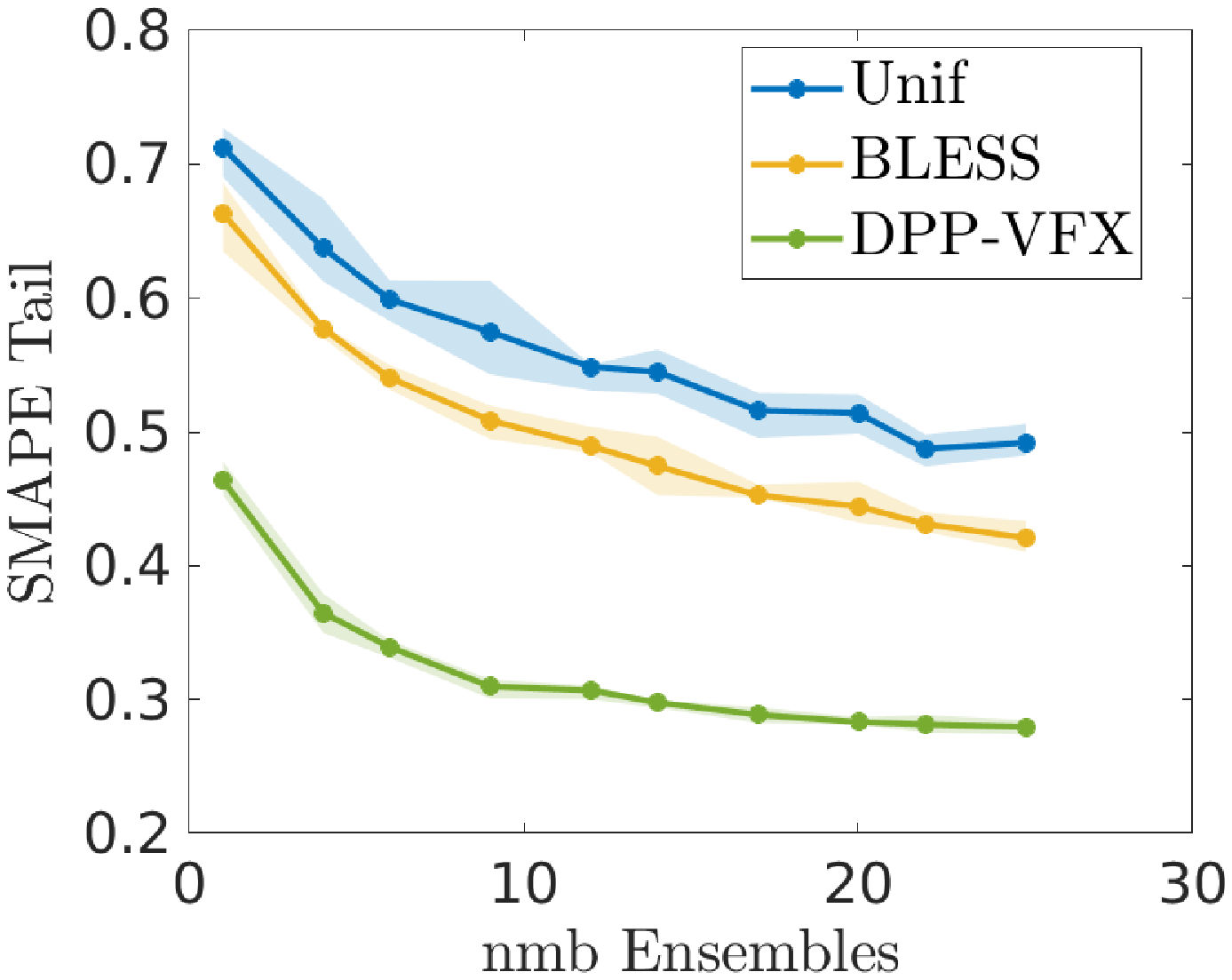}
    \end{minipage}
    \caption{Ensemble KRR on the Bikesharing and CASP dataset (from left to right). The SMAPE on the bulk and tail of the dataset is given in function of the number of ensembles.}
	\label{fig:KKR:KKR_LS}
\end{figure*}
It is now possible to illustrate the connection between Corollary \ref{Corollary:spectral} and implicit regularization. We give a lower bound for the identity in Corollary \ref{Corollary:spectral}. 
\begin{proposition} With the notations defined above, we have
\label{prop:UpperBound}
\begin{equation}
\mathbb{E}_{C}[C K_{\mathcal{C}\mathcal{C}}^{-1}C^\top] \succeq \sum_{\ell=1}^{n} \frac{ \bm{v}_\ell \bm{v}_\ell^\top}{  \lambda_\ell +\alpha},\label{eq:UpperBound}
\end{equation}
where $\alpha = \sum_{i=k}^n \lambda_i$ and  $\mathcal{C}\sim kDPP(K)$.
\end{proposition}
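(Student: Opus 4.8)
The plan is to combine the spectral formula in Corollary~\ref{Corollary:spectral} with an elementary inequality for symmetric functions. Since the right-hand side of \eqref{eq:UpperBound} and the matrix $\mathbb{E}_{C}[C K_{\mathcal{C}\mathcal{C}}^{-1}C^\top]$ are both symmetric and, by Corollary~\ref{Corollary:spectral}, diagonal in the same orthonormal eigenbasis $\{\bm{v}_\ell\}_{\ell\in[n]}$ of $K$, the operator inequality \eqref{eq:UpperBound} is equivalent to the $n$ scalar inequalities
\[
\lambda_\ell + \frac{e_k(\bm{\lambda}_{\hat{\ell}})}{e_{k-1}(\bm{\lambda}_{\hat{\ell}})} \ \le\ \lambda_\ell + \alpha, \qquad \ell\in[n].
\]
Because $K\succ 0$, every eigenvalue is positive, so each $e_j(\bm{\lambda}_{\hat{\ell}})>0$ and $\alpha=\sum_{i=k}^{n}\lambda_i>0$; thus it suffices to prove $e_k(\bm{\lambda}_{\hat{\ell}})/e_{k-1}(\bm{\lambda}_{\hat{\ell}}) \le \alpha$ for every $\ell$. (When $k=n$ the numerator is $e_n$ of a vector with only $n-1$ entries, hence $0$, and the bound is trivial, so we may assume $k\le n-1$.)

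The heart of the argument is the following claim about elementary symmetric polynomials: for reals $\mu_1\ge\cdots\ge\mu_m\ge 0$ and $1\le k\le m$,
\[
e_k(\mu_1,\dots,\mu_m) \ \le\ e_{k-1}(\mu_1,\dots,\mu_m)\,\sum_{j=k}^{m}\mu_j .
\]
I would prove this by expanding $e_k$ as a sum of monomials over $k$-element index sets and, in each monomial, factoring out the variable of largest index; grouping the monomials by the value $j\ge k$ of this largest index gives the exact identity $e_k(\bm{\mu})=\sum_{j=k}^{m}\mu_j\, e_{k-1}(\mu_1,\dots,\mu_{j-1})$, and each partial sum $e_{k-1}(\mu_1,\dots,\mu_{j-1})$ is bounded by $e_{k-1}(\mu_1,\dots,\mu_m)$ since all the $\mu_i$ are nonnegative.

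Applying this claim to the $(n-1)$-dimensional vector $\bm{\lambda}_{\hat{\ell}}$ shows that $e_k(\bm{\lambda}_{\hat{\ell}})/e_{k-1}(\bm{\lambda}_{\hat{\ell}})$ is at most the sum of the $n-k$ smallest entries of $\bm{\lambda}_{\hat{\ell}}$. It then remains to check that deleting any single eigenvalue from $\bm{\lambda}$ cannot make this tail sum exceed $\alpha$: if $\ell\ge k$ the $n-k$ smallest entries of $\bm{\lambda}_{\hat{\ell}}$ are exactly $\{\lambda_k,\dots,\lambda_n\}\setminus\{\lambda_\ell\}$, and if $\ell<k$ they are $\{\lambda_{k+1},\dots,\lambda_n\}$; in both cases they form a sub-multiset of $\{\lambda_k,\dots,\lambda_n\}$, so their sum is at most $\sum_{i=k}^{n}\lambda_i=\alpha$. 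This gives $e_k(\bm{\lambda}_{\hat{\ell}})/e_{k-1}(\bm{\lambda}_{\hat{\ell}})\le\alpha$ for all $\ell$, which by the first paragraph is precisely \eqref{eq:UpperBound}.

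The only genuinely new ingredient — and the step I expect to be the main obstacle — is the symmetric-function inequality of the second paragraph, together with the correct guess that the relevant bound is the \emph{tail} sum $\lambda_k+\cdots+\lambda_n$ rather than some other combination of eigenvalues; once that is in place, the rest is routine bookkeeping (diagonalization, positivity, and the case split on the position of the deleted eigenvalue), with only mild care needed about the ordering convention and the degenerate case $k=n$.
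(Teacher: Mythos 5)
Your proof is correct and follows essentially the same route as the paper: reduce via Corollary~\ref{Corollary:spectral} to the scalar bounds $e_k(\bm{\lambda}_{\hat{\ell}})/e_{k-1}(\bm{\lambda}_{\hat{\ell}})\le\alpha$, control that ratio by the tail sum of $\bm{\lambda}_{\hat{\ell}}$, and finish with the same case split on the position of the deleted eigenvalue. The only difference is that the paper obtains the tail-sum bound by invoking Lemma~\ref{lemma:Guruswami} (Guruswami--Sinop) with $l=k$, whereas you prove exactly that special case from scratch via the grouping identity $e_k(\bm{\mu})=\sum_{j=k}^{m}\mu_j\,e_{k-1}(\mu_1,\dots,\mu_{j-1})$, which makes your argument self-contained.
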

The above bound  matches the expectation formula for DPPs for this specific $\alpha$.
Also, notice that it was remarked in \cite{CSSP} that if $\alpha = \sum_{i=k}^n \lambda_i$ then $\mathbb{E}_{\mathcal{C}\sim DPP(K/\alpha)}[|\mathcal{C}|] \leq k$.
The inequality \eqref{eq:UpperBound} is obtained thanks to the following Lemma with $l=k$.
\begin{lemma}[Eqn 1.3 in \cite{GuruswamiSinop}]
Let $\bm{\sigma}\in \mathbb{R}^n$ be a vector with entries $\sigma_1\geq \dots\geq \sigma_n \geq 0$. Let $k$ and $l$ be integers such that $k\geq l>0$. Then, we have
$
\frac{e_{k+1}(\bm{\sigma})}{e_{k}(\bm{\sigma})}\leq \frac{1}{k-l +1}\sum_{i=l+1}^{n}\sigma_i.
$\label{lemma:Guruswami}
\end{lemma}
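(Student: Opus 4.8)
To prove Lemma~\ref{lemma:Guruswami} the plan is to avoid any generating-function or induction machinery and instead argue by a direct expansion of both sides over subsets, followed by one counting estimate. Clearing denominators, the claim is equivalent to
\[
(k-l+1)\,e_{k+1}(\bm\sigma)\;\le\;e_{k}(\bm\sigma)\sum_{i=l+1}^{n}\sigma_i .
\]
If $e_k(\bm\sigma)=0$ the inequality is immediate (then $e_{k+1}(\bm\sigma)=0$ as well, while the right-hand side is nonnegative), so I may assume $e_k(\bm\sigma)>0$ and it suffices to establish the displayed polynomial inequality.

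First I would expand the right-hand side as $\sum_{|T|=k}\sum_{j=l+1}^{n}\sigma_j\prod_{i\in T}\sigma_i$ and split each summand according to whether $j\in T$. The summands with $j\in T$ are of the form $\sigma_j^{2}\prod_{i\in T\setminus\{j\}}\sigma_i\ge 0$ and may simply be dropped. The summands with $j\notin T$ equal $\prod_{i\in S}\sigma_i$ where $S:=T\cup\{j\}$ has size $k+1$. Grouping these contributions according to the $(k+1)$-set $S$, the number of pairs $(T,j)$ with $T\cup\{j\}=S$, $|T|=k$, $j>l$ and $j\notin T$ is exactly $\#\{\,j\in S : j>l\,\}$, so after regrouping the retained part of the right-hand side equals $\sum_{|S|=k+1}\#\{\,j\in S : j>l\,\}\prod_{i\in S}\sigma_i$.

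The crux is then the elementary counting bound $\#\{\,j\in S : j>l\,\}\ge |S|-l=k-l+1$, which holds because $S$ contains at most $l$ of the indices $1,\dots,l$, and which is a positive integer precisely thanks to the hypothesis $k\ge l$. Since every $\prod_{i\in S}\sigma_i$ is nonnegative, summing yields $e_{k}(\bm\sigma)\sum_{i>l}\sigma_i\ge (k-l+1)\sum_{|S|=k+1}\prod_{i\in S}\sigma_i=(k-l+1)\,e_{k+1}(\bm\sigma)$, and dividing by $(k-l+1)e_k(\bm\sigma)>0$ gives the statement. I would also note that the ordering $\sigma_1\ge\cdots\ge\sigma_n$ is never used in this argument --- only nonnegativity and the fact that $\{l+1,\dots,n\}$ misses exactly $l$ indices --- so the same proof bounds $e_{k+1}(\bm\sigma)/e_k(\bm\sigma)$ by $\frac{1}{k-l+1}\sum_{i\in I}\sigma_i$ for any index set $I$ of size $n-l$; the ordering is what makes the particular choice $I=\{l+1,\dots,n\}$ sharp, since it places the smallest entries into the tail sum.

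The step I expect to need the most care is the bijective bookkeeping in the second paragraph: one must check that discarding the $j\in T$ terms is legitimate (it is, by nonnegativity) and that each squarefree monomial $\prod_{i\in S}\sigma_i$ is produced with multiplicity exactly $\#\{\,j\in S : j>l\,\}$, and not $|S|$ or $1$. Everything else --- in particular the counting inequality $\#\{\,j\in S : j>l\,\}\ge k-l+1$, which is the actual content --- is a one-line observation once the regrouping has been set up correctly.
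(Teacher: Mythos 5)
The paper does not prove this lemma at all --- it is imported as a black box, citing Eqn.~1.3 of \cite{GuruswamiSinop} --- so there is no in-paper argument to compare yours against. Your proof is correct and self-contained: clearing denominators, expanding $e_k(\bm{\sigma})\sum_{j>l}\sigma_j$ over pairs $(T,j)$, discarding the nonnegative $j\in T$ terms, and regrouping by $S=T\cup\{j\}$ does produce each monomial $\prod_{i\in S}\sigma_i$ with multiplicity exactly $\#\{j\in S: j>l\}$, and the bound $\#\{j\in S : j>l\}\ge |S|-l=k-l+1$ is exactly right since $S$ meets $\{1,\dots,l\}$ in at most $l$ indices. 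Your side remark is also accurate: monotonicity of $\bm{\sigma}$ is not needed for the inequality itself, only to make $\{l+1,\dots,n\}$ the optimal choice of an $(n-l)$-element index set, which is the form the paper actually uses (with $l=k$) in the proof of Proposition~\ref{prop:UpperBound}. The single nitpick is the degenerate case $e_k(\bm{\sigma})=0$: there the stated ratio is $0/0$ rather than ``immediate,'' so strictly speaking the lemma should be read in its cleared-denominator form $(k-l+1)\,e_{k+1}(\bm{\sigma})\le e_k(\bm{\sigma})\sum_{i>l}\sigma_i$, which your argument does establish (and which is all that is needed downstream, where $K\succ 0$ forces $e_k(\bm{\lambda}_{\hat{\ell}})>0$ anyway).
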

With the help of Lemma \ref{lemma:Guruswami}, we can prove \eqref{eq:UpperBound}. \begin{proof}[Proof of Proposition \ref{prop:UpperBound}] Let $k\geq 1$. 
We can lower bound the ratio $\frac{e_{k-1}(\bm{\lambda}_{\hat{\ell}})}{e_k(\bm{\lambda}_{\hat{\ell}})}$ in \eqref{eq:kDPPreg} by using Lemma \ref{lemma:Guruswami}. Namely let $\bm{\sigma}$ be the vector $\bm{\lambda}_{\hat{\ell}}\in\mathbb{R}^{n-1}$ with entries sorted in decreasing order, and let $l=k$. Then, it holds that
$    \frac{e_{k}(\bm{\sigma})}{e_{k-1}(\bm{\sigma})}\leq \sum_{i=k}^{n-1}\sigma_i$. By using the definition of $\bm{\sigma}$, we find that, if $k<\ell$, we have $\sum_{i=k}^{n-1}\sigma_i =-\lambda_\ell + \sum_{i=k}^{n}\lambda_i $. Otherwise, if $k\geq \ell$, we have $\sum_{i=k}^{n-1}\sigma_i = \sum_{i=k+1}^{n}\lambda_i$. Hence, we find the upper bound
\[
\frac{e_{k}(\bm{\sigma})}{e_{k-1}(\bm{\sigma})}\leq \sum_{i=k}^{n-1}\sigma_i\leq \sum_{i=k}^{n}\lambda_i = \alpha,
\]
since $\bm{\lambda}\geq 0$.
Finally, the statement is proved by using the latter inequality and the identity \eqref{eq:kDPPreg}.
\end{proof}

\begin{remark}[Upper bound]
Consider the term $\ell=n$ in \eqref{eq:kDPPreg}. Then, the additional term at the denominator can be lower bounded as follows:
\[
\frac{e_k(\bm{\lambda}_{\hat{n}})}{e_{k-1}(\bm{\lambda}_{\hat{n}})}\geq \frac{n-k}{k}\lambda_{n-1} \left(\frac{\lambda_{n-1}}{\lambda_{1}}\right)^{k-1}\geq 0,
\]
where we used that $e_k(\bm{\lambda}_{\hat{n}})$ includes $\binom{n-1}{k}$ terms. This bound is pessimistic although it instructs that a small $k$ benefits to the regularization.
\end{remark}
As we have observed, the formulae of Theorem \ref{thm:implicit} or Corollary \ref{Corollary:spectral} show that the expectation over diverse subsets implicitly regularize the inverse of the kernel matrix. The improvement of this bound is worth further investigation.
A related work \cite{DerezinskiAISTATS} uses the same formula given in Theorem \ref{thm:implicit} to study the convergence of a random block coordinate optimization method for Kernel Ridge Regression, but does not study the ridgeless limit.

\section{Experimental results}

Sampling a more diverse subset improves the performance of Nystr\"om approximation and KRR~\cite{fanuel2020diversity}. In these experiments, we discuss ensemble approaches for the ridgeless case. The following datasets\footnote{\url{https://archive.ics.uci.edu/ml/index.php}} are used:  
\texttt{Adult}, \texttt{Abalone}, \texttt{Wine Quality}, \texttt{Bike Sharing} and \texttt{CASP}. 
We use 3 sampling algorithms with increasing diversity: uniform sampling, exact ridge leverage score sampling (RLS)~\cite{ElAlaouiMahoney} and kDPP sampling~\cite{kulesza2011k}. For larger datasets the BLESS algorithm~\cite{rudi2018fast} is used instead of RLS and DPP-VFX~\cite{derezinski2019exact} to speed up the sampling of a kDPP.  These algorithms have a relativity small re-sampling cost that motivates their use for ensemble approaches. RLS can be seen as a cheaper proxy for DPP sampling as done in~\cite{CSSP}. The different parameters and sample sizes are given in the Supplementary Material.   A Gaussian kernel with bandwidth $\sigma$ is used after standardizing the data. All the simulations are repeated 10 times, the averaged is displayed and the errorbars show the $0.25$ and $0.75$ quantile.

\paragraph{Ensemble Nystr\"om.}
The accuracy of the approximation is evaluated by calculating $\|K-\hat{K}\|_F/\|K\|_F$ with the ensemble Nystr\"om approximation $\hat{K} = \frac{1}{m}\sum_{i=1}^m K{C}_i (K_{\mathcal{C}_i\mathcal{C}_i}+\varepsilon\mathbb{I}_k)^{-1}C_i^\top K$ with $\varepsilon = 10^{-12}$ for numerical stability. We illustrate the use of diverse ensembles  on Figure~\ref{fig:Nystrom}. Averaging multiple Nystr\"om approximations improves the accuracy. The gain is the most apparent for the more diverse sampling algorithms. Similarly to the experiments in~\cite{kumar2009ensemble}, we see that uniform sampling combined with equal mixture weights does not improve performance. This is not the case when using more sophisticated sampling algorithms.

\begin{figure}[h]
	\centering
        \includegraphics[width=0.8\linewidth]{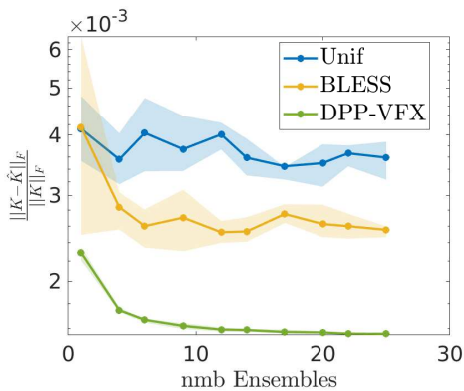}
	\caption{Ensemble Nystr\"om approximation on the Adult dataset. The relative Forbenius norm of the approximation is given in function of the number of ensembles.}
	\label{fig:Nystrom}
\end{figure}

\paragraph{Ensemble KRR.}
Following the implicit regularization of DPP samplings, we asses the performance of averaging ridgeless predictors trained on DPP subsets. Prediction is done by averaging the ridgeless predictors in an ensemble approach: $\bar{f}= \frac{1}{m} \sum_{i=1}^{m} f^\star_{\mathcal{C}_i}$.
We evaluate by the same procedure as in~\cite{fanuel2020diversity}. The dataset is split in $50\%$ training data and $50\%$ test data, so to make sure the train and test set have similar RLS distributions.   To evaluate the performance, the dataset is stratified, i.e., the test set is divided into 'bulk' and 'tail' as follows:
the bulk corresponds to test points where the RLS with regularization $\alpha = 10^{-4} \times n_{\mathrm{train}}$ are smaller than or equal to the 70\% quantile, while the tail of the data corresponds  to test points where the ridge leverage score is larger than the 70\% quantile.  This stratification of the dataset allows to visualize how the regressor performs in dense (small RLS) and sparser (large RLS) groups of the dataset. We calculate the symmetric mean absolute percentage error (SMAPE): $\frac{1}{n} \sum_{i=1}^{n} \frac{\left|y_{i}-\hat{y}_{i}\right|}{\left(\left|y_{i}\right|+\left|\hat{y}_{i}\right|\right) / 2}$ of each group. The results for exact sampling algorithms are visualised on Figure~\ref{fig:KKR:KKR}, approximate algorithms are given on Figure~\ref{fig:KKR:KKR_LS}. Combining multiple subsets shows a reduction in error. Following~\cite{fanuel2020diversity},  sampling a more diverse subset improves the performance of the KRR. Particularly diverse sampling has comparable performance for the bulk data, while performing much better in the tail of the data. Importantly, all the methods reach a stable performance before the number of points used by all interpolators exceeds the total number of training points.

\FloatBarrier

\subsection*{Acknowledgements}
\footnotesize{
EU: The research leading to these results has received funding from
the European Research Council under the European Union's Horizon
    2020 research and innovation program / ERC Advanced Grant E-DUALITY
    (787960). This paper reflects only the authors' views and the Union
    is not liable for any use that may be made of the contained information.
Research Council KUL: Optimization frameworks for deep kernel machines C14/18/068
Flemish Government:
FWO: projects: GOA4917N (Deep Restricted Kernel Machines:
        Methods and Foundations), PhD/Postdoc grant
Impulsfonds AI: VR 2019 2203 DOC.0318/1QUATER Kenniscentrum Data
        en Maatschappij
Ford KU Leuven Research Alliance Project KUL0076 (Stability analysis
    and performance improvement of deep reinforcement learning algorithms). The computational resources and services used in this work were provided by the VSC (Flemish Supercomputer Center), funded by the Research Foundation - Flanders (FWO) and the Flemish Government – department EWI.}

\bibliography{example_paper}
\bibliographystyle{alpha}

\appendix

\section{Parameters and dataset descriptions}
The parameters and datasets used in the simulations can be found in Table~\ref{Table:data}. The dataset dimensions are given by $n$ and $d$, $\sigma$ is the bandwidth of the Gaussian kernel, $k$ the size of the subset. The regularization parameter of the RLS is equal to $\lambda_{\mathrm{RLS}}$. The parameters for DPP-VFX correspond to $\bar{\mathrm{q}}_\mathrm{xdpp}$ and $\bar{\mathrm{q}}_\mathrm{bless}$. These are the oversampling parameters for internal Nystr\"om approximation of BLESS and DPP-VFX used to guarantee that everything terminates. Tuning parameters of the BLESS algorithm are  $q_0$, $c_0$, $c_1$, $c_2$.

\onecolumn

\begin{table*}[ht]
	\caption{Datasets and parameters used in the experiments.}
	\label{Table:data}
        \begin{center}
                \begin{tabular}{rccccccccccc}
                        \toprule
                        Dataset & $n$ & $d$  & $\sigma$ & $k$  & $\lambda_{\mathrm{RLS}}$ & $\bar{\mathrm{q}}_\mathrm{xdpp}$ & $\bar{\mathrm{q}}_\mathrm{bless}$ & $q_0$ & $c_0$& $c_1$& $c_2$\\ \midrule
\texttt{Adult} & $48842$ & $110 $ & $5$ & $250$ & $10^{-3}$ & $3$ & $3$  & $2$ & $2$ & $3$ & $3$\\
\texttt{Abalone}& $4177$ &$ 8 $ & $3$ & $50$ & $10^{-4}$ & / & /  & / & / & / & /\\
\texttt{Wine Quality} & $6497$ & $11$  & $5$ & $100$ & $10^{-4}$ & / & /  & / & / & / & /\\
\texttt{Bike Sharing} & $17389$ & $16 $ & $3$ & $250$  & $10^{-3}$ & $3$ & $3$  & $2$ & $2$ & $3$ & $3$\\ 			
\texttt{CASP} & $45730$ & $9$   & $2$ & $250$ & $10^{-3}$ & $3$ & $3$  & $2$ & $2$ & $3$ & $3$\\
 \bottomrule
 		\end{tabular}
 	\end{center}
 \end{table*}

\end{document}